\newcommand{\sgn}{\operatornamewithlimits{sgn}}
\newcommand{\argmin}{\operatornamewithlimits{argmin}}
\newcommand{\subto}{\operatornamewithlimits{s.t.}}
\newcommand{\defeq}{\mathrel{\mathop:}=}
\newcommand{\fold}{\operatornamewithlimits{fold}}
\newcommand{\unfold}{\operatornamewithlimits{unfold}}
\newcommand{\vectorize}{\operatornamewithlimits{vec}}
\newcommand{\reshape}{\operatornamewithlimits{reshape}}
\newcommand{\bs}{\boldsymbol}
\newcommand{\mc}{\mathcal}
\newtheorem{theorem}{Theorem}
\newtheorem{definition}{Definition}
\ifcvprfinal\pagestyle{empty}\fi
\begin{document}

%%%%%%%%% TITLE
\title{Visual Data Deblocking using Structural Layer Priors}

\author{Xiaojie Guo\\
State Key Laboratory of Information Security,
IIE, Chinese Academy of Sciences\\
{\tt\small xj.max.guo@gmail.com}
% For a paper whose authors are all at the same institution,
% omit the following lines up until the closing ``}''.
% Additional authors and addresses can be added with ``\and'',
% just like the second author.
% To save space, use either the email address or home page, not both
}

\thispagestyle{empty}
\twocolumn[{
\renewcommand\twocolumn[1][]{#1}
\maketitle
\begin{center}
\centering
\includegraphics[width=\linewidth]{./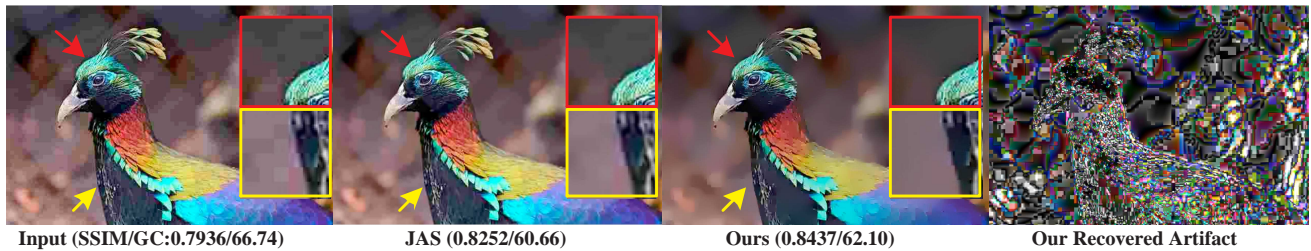}
\vspace{-15pt}
\captionof{figure}{\textbf{Left:} the compressed frame. \textbf{Mid-Left:} the recovered intrinsic layer by JAS \cite{artifactor}. \textbf{Rest:} the recovered intrinsic $\bs{\mc{L}}_I$ layer and artifact  $\bs{\mc{L}}_A$ layers by our proposed DSLP, respectively. Please see the zoomed-in patches for details. }
\label{fig:open}
\end{center}
}]

%%%%%%%%% ABSTRACT
\begin{abstract}

  The blocking artifact frequently appears in compressed real-world images or video sequences, especially coded at low bit rates, which is visually annoying and likely hurts the performance of many computer vision algorithms. A compressed frame can be viewed as the superimposition of an intrinsic layer and an artifact one. Recovering the two layers from such frames seems to be a severely ill-posed problem since the number of unknowns to recover is twice as many as the given measurements. In this paper, we propose a simple and robust method to separate these two layers, which exploits structural layer priors including the gradient sparsity of the intrinsic layer, and the independence of the gradient fields of the two layers. A novel Augmented Lagrangian Multiplier based algorithm is designed to efficiently and effectively solve the recovery problem. Extensive experimental results demonstrate the superior performance of our method over the state of the arts, in terms of visual quality and simplicity.
\end{abstract}

%%%%%%%%% BODY TEXT
\section{Introduction}
With the emergence of mobile devices, the amount of user captured and shared images and videos rapidly increases. A huge space for storing and a wide bandwidth for transmitting such data are required if without reducing their file sizes properly. Image and video compression techniques have been designed to reduce the file size meanwhile preserve the visual quality of the frames. JPEG \cite{JPEG}, MPEG and H.26x \cite{H263,H264} are classic and widely used standards in its history, which employ the block Discrete Cosine Transform (DCT),  due to its good energy compaction and decorrelation properties, to achieve the compression. However, an inevitable problem of these standards is that as the compression ratio increases, the fidelity of coded images degrades, \textit{i.e.} details are ruined and artificial block boundaries appear. The compression artifacts are perceptually annoying, and more importantly, very likely to degenerate the performance of many computer vision algorithms that are primarily designed for uncompressed images or videos, such as image enhancement \cite{EH,DH,DH1,DH2}, feature extraction \cite{FE,SIFT}, over-segmentation \cite{SP,SP1,SP2} and super-resolution \cite{SR,SR2}. Hence, the technique for removing or reducing these artifacts is desirable. 

Considering the flexibility to existing codecs makes post-processing approaches attractive, which handle compressed frames at the decoder end, without changing the maturing structure of existing codecs. Mathematically, the compressed image/video sequence $\bs{\mc{C}}$ can be modeled as a linear combination of two components: $\bs{\mc{C}}=\bs{\mc{L}}_I+\bs{\mc{L}}_A$, where $\bs{\mc{L}}_I$ and $\bs{\mc{L}}_A$ represent the intrinsic layer and the artifact layer, respectively (\textit{e.g.} Fig. \ref{fig:open}). In the last decades, significant research has been made towards the development of post-processing style deblocking techniques, which can be broadly categorized into two different groups, namely the denoising-style deblocker and the restoration-style one. 

The denoising-style deblockers attempt to suppress the effect of $\bs{\mc{L}}_A$ by (adaptive) local filters. Very first work proposed by Lim and Reeve \cite{ImageDe:Lim} employs the low pass filter on boundaries, which may also blur intrinsic edges of the image. To address this problem, techniques that adaptively perform filtering on regions obtained by either classification or detection have been proposed \cite{ImageDe:Ram,ImageDe:Foi}. The recent video coding standard, H.264/AVC \cite{H264}, analyzes artifacts and chooses different filters for different block boundaries according to their local properties. WNNM \cite{WNNM} and (V)BM3D \cite{BM3D} have the same goal of reducing artifacts, although they are originally designed for denoising by utilizing repetitive patterns in the target images or videos. These filtering methods consider the artifacts as noises to be smoothed for visual improvement. \emph{However, in general, this kind of deblockers aims at heuristically smoothing visible artifacts without objective criterion, instead of genuinely restoring the original information.}

Alternatively, the restoration-style methods focus on recovering $\bs{\mc{L}}_I$ under some assumptions. Various priors have been exploited \cite{ImageDe:TV,ADMTV,ImageDe:FoE,NN}. Jung \textit{et al.} attempt to reconstruct the intrinsic layer via sparse representation, which, however, requires the compression ratio is known and the dictionary is well-learned \cite{ImageDe:Jung}. Similarly to \cite{ImageDe:Jung}, Choi \textit{et al.} \cite{IM} propose a learning based approach to reduce JPEG artifacts for providing more accurate results in image matting. More recently, Sun and Liu \cite{videoDe:NC} introduce a non-causal temporal prior for video deblocking, which iteratively refines the target frames and the estimation of motion across them. Due to the iterative procedure and the optical flow estimation, the computational load of this approach is very heavy, which limits its applicability. Li \textit{et al.} \cite{artifactor} develop a four-step method including structure-texture decomposition, scene detail extraction, block artifact reduction and layer recomposition. This approach can produce promising results when the whole or a big part of image with poor texture. In other words, the block artifacts in poor texture regions are well suppressed. Otherwise, its performance sharply degrades. Usually, the recovered results obtained by the restoration-style methods are of better quality than those by the denoising ones. \emph{But they are either time consuming and complex (hard to be applied to real world tasks), or case dependent (short of generality).}

As can be seen from the aforementioned methods, the characteristics of the two layers have been well investigated individually, the relationship between the two layers, however, has been rarely studied. In this paper, we show how to decompose the intrinsic and artifact layers for an image or a video sequence by exploiting some strong structural layer priors in both the two layers. The main contributions of this paper can be summarized as follows:
\begin{itemize}
\renewcommand{\labelitemi}{$\bullet$}
  \item We propose an effective one-step visual data deblocking method DSLP that harnesses two structural layer priors, \textit{i.e.} 1) the \textit{independence} between the gradient fields of the two layers, and 2) the \textit{sparsity} of the gradient field of the intrinsic layer, in a unified fashion. 
  \item We design a novel Augmented Lagrange Multiplier based algorithm to efficiently and effectively seek the solution of the associated optimization problem. To demonstrate the efficacy and the superior performance of the proposed algorithm over the state-of-the-art alternatives, extensive experiments are conducted.
\end{itemize}

\section{Deblocking using Structural Layer Priors}
\subsection{Notations}
We first introduce some notations used in this paper. Lowercase letters $(a, b,...)$ mean scalars, bold lowercase letters $(\bs{a}, \bs{b},...)$ vectors, %$\bs{a}_j$ stands for the $j^{th}$ entry in $\bs{a}$. 
while bold uppercase letters $(\bs{A}, \bs{B},...)$ matrices. Specifically, $\bs{I}$ and $\bs{1}$ stand for the identity matrix and matrix of all ones with compatible dimensions. The vectorization operation of a matrix $\vectorize(\bs{A})$ is to convert a matrix into a vector. Bold calligraphic uppercase letters $(\bs{\mc{A}}, \bs{\mc{B}},...)$ represent high order tensors. $\bs{\mc{A}}\in\mathbb{R}^{D_1\times D_2\times\cdots\times D_n}$ denotes an $n$-order tensor, whose elements are represented by $a_{d_1,d_2,...,d_n}\in\mathbb{R}$. $\bs{a}_{d_1,...,d_{k-1},:,d_{k+1},...,d_n}\in\mathbb{R}^{D_k\times 1}$ means the mode-$k$ fiber of $\bs{\mc{A}}$ at $\{d_1,...,d_{k-1},d_{k+1},...,d_n\}$, which is the higher order analogue of matrix rows and columns. The Frobenius and $\ell^1$ norms of $\bs{\mc{A}}$ are respectively defined as $\|\bs{\mc{A}}\|_F\defeq\sqrt{\sum a_{d_1,d_2,...,d_n}^2}$ and $\|\bs{\mc{A}}\|_1\defeq\sum |a_{d_1,d_2,...,d_n}|$, while the $\ell^0$ norm $\|\bs{\mc{A}}\|_0$ is the number of non-zero elements in $\bs{\mc{A}}$. The inner product of two tensors with identical size is computed as $\langle\bs{\mc{A}}, \bs{\mc{B}}\rangle\defeq\sum(a_{d_1,d_2,...,d_n}\cdot b_{d_1,d_2,...,d_n})$. $\mc{S}_{\bs{\mc{W}}}[\bs{\mc{A}}]$ represents the non-uniform shrinkage operator, the definition of which is that, for each element in $\bs{\mc{A}}$, $\mc{S}_{w_{d_1,d_2,...,d_n}}[a_{d_1,d_2,...,d_n}]\defeq\sgn(a_{d_1,d_2,...,d_n})\cdot\max(|a_{d_1,d_2,...,d_n}|-w_{d_1,d_2,...,d_n},0)$. And $\bs{\mc{A}}\odot\bs{\mc{B}}$ means the Hadamard product of two tensors with same size. The mode-$k$ unfolding of $\bs{\mc{A}}$ is to convert a tensor $\bs{\mc{A}}$ into a matrix, \textit{i.e.} $\unfold(\bs{\mc{A}},k)\defeq\bs{A}_{[k]}\in\mathbb{R}^{D_k\times\prod_{i\neq k}D_{i}}$. Moreover, we denote $\bs{a}_{[k]}\defeq\vectorize(\bs{A}_{[k]})\in\mathbb{R}^{\prod_{i=1}^nD_{i}\times 1}$. The mode-$k$ folding transforms $\bs{A}_{[k]}$ to $\bs{\mc{A}}$, say $\fold(\bs{A}_{[k]},k)\defeq\bs{\mc{A}}$. And the operator $\reshape(\bs{a}_{[k]},k)$ is to reshape $\bs{a}_{[k]}$ back to $\bs{A}_{[k]}$. It is clear that, for any $k$, $\|\bs{\mc{A}}\|_F=\|\bs{A}_{[k]}\|_F=\|\bs{a}_{[k]}\|_F$, $\|\bs{\mc{A}}\|_1=\|\bs{A}_{[k]}\|_1=\|\bs{a}_{[k]}\|_1$, $\|\bs{\mc{A}}\|_0=\|\bs{A}_{[k]}\|_0=\|\bs{a}_{[k]}\|_0$, and $\langle\bs{\mc{A}}, \bs{\mc{B}}\rangle = \langle\bs{A}_{[k]}, \bs{B}_{[k]}\rangle=\langle\bs{a}_{[k]}, \bs{b}_{[k]}\rangle$. 

\subsection{Problem Formulation}
\label{sec:pro}

To be general, we employ tensors as the information container. For instance, a gray image is a $2$-order tensor, a color image $3$-order, while a color video $4$-order. Recall that the compressed image or video sequence is superimposed by the intrinsic and artifact components: $\bs{\mc{C}} = \bs{\mc{L}}_I+\bs{\mc{L}}_A$. From this model, however, we can see that the number of unknowns to be recovered is twice as many as that of the given measurements, which indicates that the problem is highly ill-posed. Therefore, without additional knowledge, the decomposition problem is intractable as it has infinitely many solutions and thus, it is impossible to identify which of these candidate solutions is indeed the ``correct" one. To make the problem well-posed, we impose additional structural layer priors on the desired solution for $\bs{\mc{L}}_I$ and $\bs{\mc{L}}_A$. Before detailing the structural layer priors and the formulation for the problem, we first define the tensor mode-$k$ derivative response and generalized tensor gradient.
\begin{definition}
(Tensor Mode-$k$ Derivative Response.) The derivative response of an $n$-order tensor $\bs{\mc{A}}$ along mode-$k$ ($k\in\{1,2,...,n\}$) fibers is defined as:
\begin{equation}
\mathfrak{R}(\bs{\mc{A}},k)\in\mathbb{R}^{D_1\times D_2\times\cdots\times D_n}\defeq\fold(f_{\frac{\pi}{2}}\ast\bs{A}_{[k]},k),\nonumber
\end{equation} 
where $f_{\frac{\pi}{2}}$ is the vertical derivative filter and $\ast$ is the operator of convolution.
\label{def1}
\end{definition}

\begin{definition}
(Generalized Tensor Gradient.) The generalized gradient of an $n$-order tensor $\bs{\mc{A}}$ is defined as: 
\begin{equation}
\nabla\bs{\mc{A}}\defeq\{\mathfrak{R}(\bs{\mc{A}},1), \mathfrak{R}(\bs{\mc{A}},2),...,\mathfrak{R}(\bs{\mc{A}},n)\}, \nonumber
\end{equation}
which is analogue to the definition of matrix gradient.
\label{def2}
\end{definition}

Please notice that, for an image $\in\mathbb{R}^{w\times h\times c}$ ($w$, $h$ and $c$ are its width, height and color channel, respectively) and a video sequence $\in\mathbb{R}^{w\times h\times c\times t}$ ($t$ is the number of frames), the derivative response across different color channels typically does not have statistical meaning, which is therefore omitted for the rest of the paper. Furthermore, for clarity, we denote $\nabla_1$ and $\nabla_2$ as the spatial response operators in vertical and horizontal directions respectively, while $\nabla_3$ the temporal response operator. As a consequence, the gradient of images is $\nabla\defeq\{\nabla_1,\nabla_2\}$ and the gradient of videos is $\nabla\defeq\{\nabla_1,\nabla_2, \nabla_3\}$.

\paragraph{Structural layer priors for the problem.} It is well known that natural images or videos are largely piecewise smooth in both spatial and temporal, and the gradient field of intrinsic component is typically sparse. We call this \textit{the gradient sparsity prior.} In addition, the gradient fields of the two layers should be statistically (approximately) uncorrelated. Thus, we note this as \textit{the gradient independence prior.} Furthermore, we observe that the fraction of artifact in pixel values is usually much smaller than that of intrinsic.

Based on the priors and the observation stated above, the desired decomposition ($\bs{\mc{L}}_I, \bs{\mc{L}}_A$) should minimize the following objective:
\begin{equation}\begin{aligned}
\argmin_{\bs{\mc{L}}_I, \bs{\mc{L}}_A}\|\bs{\mc{L}}_A\|_F^2 + \sum_{j=1}^J(\alpha\|\nabla_j\bs{\mc{L}}_I\|_0+
\beta\|\nabla_j\bs{\mc{L}}_I\odot\nabla_j\bs{\mc{L}}_A\|_0
\\+\gamma\|\bs{\mc{G}}_j-\nabla_j\bs{\mc{L}}_I-\nabla_j\bs{\mc{L}}_A\|_F^2)~\subto~\bs{\mc{C}} = \bs{\mc{L}}_I+\bs{\mc{L}}_A 
\end{aligned} 
\label{eq:noncvx}
\end{equation}
where $\alpha$, $\beta$ and $\gamma$ are the weights controlling the importances of different terms, and $\bs{\mc{G}}_j\defeq\nabla_j\bs{\mc{C}}$ that can be computed beforehand. $J$ can be either $2$ for images or $3$ for videos. In the objective function \eqref{eq:noncvx}, the first term $\|\bs{\mc{L}}_A\|_F^2$ restricts that the artifact layer should be light, which is treated as a Gaussian noise. The second term $\sum_{j=1}^J\|\nabla_j\bs{\mc{L}}_I\|_0$ essentially enforces the recovered intrinsic layer to have sparse gradient field. And the remaining two terms constrain the gradient fields of the two layers to be independent of each other. More specifically, the third term $\sum_{j=1}^J\|\nabla_j\bs{\mc{L}}_I\odot\nabla_j\bs{\mc{L}}_A\|_0$ penalizes the overlapping of the gradient fields of the two layers, while the fourth $\sum_{j=1}^J\|\bs{\mc{G}}_j-\nabla_j\bs{\mc{L}}_I-\nabla_j\bs{\mc{L}}_A\|_F^2$ enforces that, gradients do not appear in the observation should not be groundlessly generated in both the two layers, and existing gradients would also not be gratuitously erased.

The formulation of the problem \eqref{eq:noncvx} can be further simplified according to the following theorem.% \ref{th1}.
\begin{theorem}
Suppose we are given an $n$-order tensor $\bs{\mc{A}}\in\mathbb{R}^{D_1\times D_2\times\cdots\times D_n}$, there exists a functional matrix $\bs{F}_{pq}\in\mathbb{R}^{\prod_{i=1}^nD_{i}\times\prod_{i=1}^nD_{i}}$ satisfying $\vectorize(\unfold(\nabla_{p}\bs{\mc{A}},1))=\bs{F}_{pq}\bs{a}_{[q]}$, for any $p\in\{1,2,...,n\}$ and $q\in\{1,2,...,n\}$. 
\label{th1}
\end{theorem}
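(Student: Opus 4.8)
The plan is to recognize that every operation entering the statement is linear, so the asserted identity is nothing more than the matrix representation of a composition of linear maps on the $N\defeq\prod_{i=1}^n D_i$-dimensional space of tensor entries. Consequently, existence of $\bs{F}_{pq}$ is automatic, and the only substance is an explicit bookkeeping of the factors.

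First I would verify linearity of each ingredient of the left-hand side. The unfolding $\bs{A}_{[q]}$, the refolding $\fold(\cdot,p)$, and the vectorizations $\vectorize(\cdot)$ merely rearrange the $N$ entries of the tensor, so each is a linear bijection given by a permutation matrix. The derivative response $\mathfrak{R}(\bs{\mc{A}},p)=\fold(f_{\frac{\pi}{2}}\ast\bs{A}_{[p]},p)$ is a convolution of the mode-$p$ unfolding with the fixed kernel $f_{\frac{\pi}{2}}$ along its columns, and convolution with a fixed kernel is linear irrespective of the chosen boundary convention. Hence $\bs{\mc{A}}\mapsto\vectorize(\unfold(\nabla_p\bs{\mc{A}},1))$ is linear, and since the reshaping $\bs{a}_{[q]}\mapsto\bs{A}_{[q]}\mapsto\bs{\mc{A}}$ is a linear bijection, the composite map $\bs{a}_{[q]}\mapsto\vectorize(\unfold(\nabla_p\bs{\mc{A}},1))$ is a linear map $\mathbb{R}^N\to\mathbb{R}^N$. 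Every such map is represented by a unique matrix, which I take to be $\bs{F}_{pq}$; this alone proves the theorem.

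To make $\bs{F}_{pq}$ concrete, I would write the column-wise convolution as left multiplication by a banded Toeplitz-type matrix $\bs{D}_p\in\mathbb{R}^{D_p\times D_p}$, so that $\unfold(\nabla_p\bs{\mc{A}},p)=\bs{D}_p\bs{A}_{[p]}$. The standard vectorization identity then yields $\vectorize(\bs{D}_p\bs{A}_{[p]})=(\bs{I}\otimes\bs{D}_p)\bs{a}_{[p]}$. Letting $\bs{P}_{q\to p}$ be the permutation matrix with $\bs{a}_{[p]}=\bs{P}_{q\to p}\,\bs{a}_{[q]}$ and $\bs{P}_{p\to 1}$ the permutation sending the vectorized mode-$p$ unfolding of a tensor to its vectorized mode-$1$ unfolding, the chain collapses to
\begin{equation}
\bs{F}_{pq}=\bs{P}_{p\to 1}\,(\bs{I}\otimes\bs{D}_p)\,\bs{P}_{q\to p}\in\mathbb{R}^{N\times N},\nonumber
\end{equation}
which verifies $\vectorize(\unfold(\nabla_p\bs{\mc{A}},1))=\bs{F}_{pq}\bs{a}_{[q]}$ for every $p,q\in\{1,\dots,n\}$.

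The conceptual content is minimal, so the only real work is notational. I expect the main obstacle to be keeping the Kronecker ordering consistent with the paper's definitions of $\vectorize$ and $\unfold$ (here column-major), and checking that the permutation matrices $\bs{P}_{q\to p}$ and $\bs{P}_{p\to 1}$ genuinely intertwine the various unfoldings as claimed, so that the three factors compose in the correct order. Confirming that $\bs{D}_p$ faithfully encodes the boundary treatment of $f_{\frac{\pi}{2}}\ast(\cdot)$ is a routine but necessary bookkeeping step.
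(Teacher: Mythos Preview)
Your proposal is correct and follows essentially the same route as the paper's proof: both observe that the derivative response is realized by a fixed linear operator on $\bs{a}_{[p]}$ and then insert a permutation to pass from $\bs{a}_{[q]}$ to $\bs{a}_{[p]}$. The paper writes this tersely as $\bs{F}_{pq}=\bs{F}_p\bs{P}_{pq}^T$, whereas you further factor the paper's $\bs{F}_p$ into $\bs{P}_{p\to 1}(\bs{I}\otimes\bs{D}_p)$ and prepend an abstract linearity argument; this is more explicit but not a different idea.
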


\begin{proof}
It is well known that $\vectorize(\unfold(\nabla_{p}\bs{\mc{A}},1))$ can be alternatively computed by $\bs{F}_p\bs{a}_{[p]}$, where $\bs{F}_p\in\mathbb{R}^{\prod_{i=1}^nD_{i}\times\prod_{i=1}^nD_{i}}$ has the same functional behavior with the corresponding derivative filter. Similarly, there is a permutation matrix $\bs{P}_{pq}$ that can transform $\bs{a}_{[p]}$ to $\bs{a}_{[q]}$. So we have $\bs{F}_p\bs{a}_{[p]}=\bs{F}_p\bs{P}_{pq}^T\bs{P}_{pq}\bs{a}_{[p]} = \bs{F}_p\bs{P}_{pq}^T\bs{a}_{[q]}$ based on the property of permutation matrix $\bs{P}_{pq}^T\bs{P}_{pq}=\bs{I}$, which indicates $\bs{F}_{pq}\defeq\bs{F}_p\bs{P}_{pq}^T$ is the desired matrix.
\end{proof}

With the help of Theorem \ref{th1}, the objective function \eqref{eq:noncvx} consequently turns out to be:
\begin{equation}\begin{aligned}
\argmin_{\bs{\mc{L}}_I, \bs{\mc{L}}_A}\|\bs{\mc{L}}_A\|_F^2 + \alpha\|\bs{F}\bs{l}_{I[1]}\|_0+
\beta\|\bs{F}\bs{l}_{I[1]}\odot\bs{F}\bs{l}_{A[1]}\|_0
\\+\gamma\|\bs{g}-\bs{F}\bs{l}_{I[1]}-\bs{F}\bs{l}_{A[1]}\|_F^2~\subto~\bs{\mc{C}} = \bs{\mc{L}}_I+\bs{\mc{L}}_A, 
\end{aligned} 
\label{eq:snoncvx}
\end{equation}
where $\bs{F} = [\bs{F}_{11};\bs{F}_{21};...;\bs{F}_{J1}]\in\mathbb{R}^{J\prod_{i=1}^nD_{i}\times\prod_{i=1}^nD_{i}}$, and $\bs{g} =[\vectorize(\bs{\mc{G}}_{1[1]});\vectorize(\bs{\mc{G}}_{2[1]});...;\vectorize(\bs{\mc{G}}_{J[1]})]\in\mathbb{R}^{J\prod_{i=1}^nD_{i}\times 1}$. For the rest of this paper, we will, for brevity, substitute $\bs{l}_{I[1]}$ and $\bs{l}_{A[1]}$ with $\bs{l}_{I}$ and $\bs{l}_{A}$, respectively.
%------------------------------------------------------------------------
\subsection{Optimization}

It can be seen in the objective function \eqref{eq:snoncvx}, all aforementioned priors and observation have been taken into account in a unified optimization framework for recovering the two layers. However, the objective is difficult to directly optimize due to the non-convexity of the $\ell^0$ terms. The convex relaxation for these terms is an effective manner to make the problem tractable. Hence, we replace the $\ell^0$ norm with its tightest convex surrogate, namely the $\ell^1$ norm. The optimization problem can be rewritten as:
\begin{equation}\begin{aligned}
\argmin_{\bs{\mc{L}}_I, \bs{\mc{L}}_A}\|\bs{\mc{L}}_A\|_F^2 + \alpha\|\bs{F}\bs{l}_{I}\|_1+
\beta\|\bs{F}\bs{l}_{I}\odot\bs{F}\bs{l}_{A}\|_1
\\+\gamma\|\bs{g}-\bs{F}\bs{l}_{I}-\bs{F}\bs{l}_{A}\|_F^2~\subto~\bs{\mc{C}} = \bs{\mc{L}}_I+\bs{\mc{L}}_A.
\end{aligned} 
\label{eq:cvx}
\end{equation}

The Augmented Lagrange Multiplier (ALM) with Alternating Direction Minimizing (ADM) strategy \cite{IALM} has proven to be an efficient and effective solver of problems like \eqref{eq:cvx}. To adopt ALM-ADM to our problem, we need to make our objective function separable. Thus we introduce two auxiliary variables $\bs{u}$ and $\bs{v}$ to replace $\bs{Fl}_I$ and $\bs{Fl}_A$, respectively in the objective function \eqref{eq:cvx}. Accordingly, $\bs{u}=\bs{Fl}_I$ and $\bs{v}=\bs{Fl}_A$ act as the additional constraints. Naturally, the formulation \eqref{eq:cvx} can be modified as:
\begin{equation}\begin{aligned}
\argmin_{\bs{\mc{L}}_I, \bs{\mc{L}}_A}&\|\bs{\mc{L}}_A\|_F^2 + \alpha\|\bs{u}\|_1+
\beta\|\bs{u}\odot\bs{v}\|_1+\gamma\|\bs{g}-\bs{u}-\bs{v}\|_F^2\\
&\subto~\bs{\mc{C}} = \bs{\mc{L}}_I+\bs{\mc{L}}_A, ~\bs{u}=\bs{Fl}_I, ~\bs{v}=\bs{Fl}_A.
\end{aligned} 
\label{eq:scvx}
\end{equation}
Converting the constrained minimizing problem \eqref{eq:scvx} to the unconstrained gives the augmented Lagrangian function of \eqref{eq:scvx} as follows:\begin{equation}
\mathfrak{L} = \left\{\begin{aligned}
&\|\bs{\mc{L}}_A\|_F^2 + \alpha\|\bs{u}\|_1+
\beta\|\bs{u}\odot\bs{v}\|_1\\
&+\gamma\|\bs{g}-\bs{u}-\bs{v}\|_F^2
+\Phi(\bs{\mc{X}},\bs{\mc{C}}-\bs{\mc{L}}_I-\bs{\mc{L}}_A)\\
&+\Phi(\bs{y}_1,\bs{u}-\bs{Fl}_I)+\Phi(\bs{y}_2,\bs{v}-\bs{Fl}_A),
\end{aligned}\right.
\label{eq:lp}
\end{equation}
with the definition $\Phi(\bs{\mc{A}}, \bs{\mc{B}}) \defeq \frac{\mu}{2}\|\bs{\mc{B}}\|_F^2+\langle\bs{\mc{A}},\bs{\mc{B}}\rangle$, where $\mu$ is a positive penalty scalar and, $\bs{\mc{X}}$, $\bs{y}_1$ and $\bs{y}_2$ are the Lagrangian multipliers. Besides the Lagrangian multipliers, there are four variables, including $\bs{\mc{L}}_I$, $\bs{\mc{L}}_A$, $\bs{u}$ and $\bs{v}$, to solve. The solver iteratively updates one variable at a time by fixing the others. Fortunately, each step has a simple closed-form solution, and hence can be computed efficiently. The solutions of the subproblems are as follows:

\noindent\textbf{$\bs{\mc{L}}_A$-subproblem:} With other terms fixed, we have:
\begin{equation}
\bs{\mc{L}}_A^{(t+1)}=\argmin_{\bs{\mc{L}}_A} \left\{
\begin{aligned}
&\|\bs{\mc{L}}_A\|_F^2 +\Phi(\bs{\mc{X}}^{(t)},\bs{\mc{C}}-\bs{\mc{L}}_I^{(t)}-\bs{\mc{L}}_A)\\&+\Phi(\bs{y}_2^{(t)},\bs{v}^{(t)}-\bs{Fl}_A).
\end{aligned}\right.
\label{eq:LA}
\end{equation}
For computing $\bs{\mc{L}}_A^{(t+1)}$, we take derivative of \eqref{eq:LA} with respect to $\bs{\mc{L}}_A$ and set it to zero, which gives:
\begin{equation}
\big(\bs{F}^T\bs{F}+(\frac{2}{\mu^{(t)}}+1)\bs{I}\big)\bs{l}_A = \bs{m}^{(t)}+\bs{F}^T(\bs{v}^{(t)}+\frac{\bs{y}_2^{(t)}}{\mu^{(t)}}),
\end{equation}
where $\bs{m}^{(t)}\defeq\vectorize(\bs{\mc{C}}_{[1]}+\frac{\bs{\mc{X}}^{(t)}_{[1]}}{\mu^{(t)}})-\bs{l}_I^{(t)}$ for brevity. Directly calculating the inverse of the matrix $\big(\bs{F}^T\bs{F}+(\frac{2}{\mu^{(t)}}+1)\bs{I}\big)$ is intuitive for solving $\bs{l}_A$. But if the matrix size is relatively large like in our problem, the inverse operation is very expensive. Fortunately, by assuming circular boundary conditions, we can apply FFT techniques on this problem, which enables us to efficiently compute the solution as:
\begin{equation}
\begin{aligned}
{\bs{l}}_A^{(t+1)}=\mc{F}^{-1}\bigg(\frac{\mc{F}\big(\bs{m}^{(t)}+\bs{F}^T(\bs{v}^{(t)}+\frac{\bs{y}_2^{(t)}}{\mu^{(t)}})\big)}{\mc{F}\big(\bs{F}^T\bs{F}+(\frac{2}{\mu^{(t)}}+1)\bs{I}\big)} \bigg),
\label{eq:lAFFT}
\end{aligned}
\end{equation}
\begin{equation}
{\bs{\mc{L}}}_A^{(t+1)}=\fold(\reshape(\bs{l}_A^{(t+1)},1),1),
\label{eq:LAFFT}
\end{equation}
where $\mathcal{F}(\cdot)$ and $\mathcal{F}^{-1}(\cdot)$ stand for the FFT and inverse FFT operators, respectively. The division in \eqref{eq:lAFFT} is element-wise.

\noindent\textbf{$\bs{\mc{L}}_I$-subproblem:} Discarding the unrelated terms provides:
\begin{equation}
\bs{\mc{L}}_I^{(t+1)}=\argmin_{\bs{\mc{L}}_I}\left\{\begin{aligned}
&\Phi(\bs{\mc{X}}^{(t)},\bs{\mc{C}}-\bs{\mc{L}}_I-\bs{\mc{L}}_A^{(t+1)})\\
&+\Phi(\bs{y}_1^{(t)},\bs{u}^{(t)}-\bs{Fl}_I). \end{aligned}\right.
\label{eq:LI}
\end{equation}
Similarly to the $\bs{\mc{L}}_A$ subproblem, the updating of $\bs{\mc{L}}_I^{(t+1)}$ can be done in the following manner:
\begin{equation}
\begin{aligned}
{\bs{l}}_I^{(t+1)}=\mc{F}^{-1}\bigg(\frac{\mc{F}\big(\bs{w}^{(t)}+\bs{F}^T(\bs{u}^{(t)}+\frac{\bs{y}_1^{(t)}}{\mu^{(t)}})\big)}{\mc{F}\big(\bs{F}^T\bs{F}+\bs{I}\big)} \bigg),
\label{eq:lIFFT}
\end{aligned}
\end{equation}
\begin{equation}
{\bs{\mc{L}}}_I^{(t+1)}=\fold(\reshape(\bs{l}_I^{(t+1)},1),1),
\label{eq:LIFFT}
\end{equation}
with $\bs{w}^{(t)}\defeq\vectorize(\bs{\mc{C}}_{[1]}+\frac{\bs{\mc{X}}^{(t)}_{[1]}}{\mu^{(t)}})-\bs{l}_A^{(t+1)}$.

\begin{algorithm}[t]
\SetAlgoLined
\caption{Deblocking using Structural Layer Priors}
\KwIn{The observed tensor $\bs{\mc{C}}\in\mathbb{R}^{D_1\times\cdots\times D_n}$; \\$\alpha\geq 0$; $\beta\geq 0$; $\gamma\geq 0$.}
\textbf{Initi.}{: $\mu^{(0)}>0$; $\rho>1$; $t=0$; $\bs{\mc{L}_I}^{(0)}=\bs{\mc{L}_A}^{(0)}=\bs{\mc{X}}^{(0)}=\bs{0}\in\mathbb{R}^{D_1\times\cdots\times D_n}$; $\bs{u}^{(0)}=\bs{v}^{(0)}=\bs{y}_1^{(0)}=\bs{y}_2^{(0)}=\bs{0}\in\mathbb{R}^{J\prod_{i=1}^nD_i\times 1}$;}\\
\While{not converged}{
$\bs{m}^{(t)}=\vectorize(\bs{\mc{C}}_{[1]}+\frac{\bs{\mc{X}}^{(t)}_{[1]}}{\mu^{(t)}})-\bs{l}_I^{(t)}$;\\
${\bs{l}}_A^{(t+1)}=\mc{F}^{-1}\bigg(\frac{\mc{F}\big(\bs{m}^{(t)}+\bs{F}^T(\bs{v}^{(t)}+\frac{\bs{y}_2^{(t)}}{\mu^{(t)}})\big)}{\mc{F}\big(\bs{F}^T\bs{F}+(\frac{2}{\mu^{(t)}}+1)\bs{I}\big)} \bigg)$;\\
${\bs{\mc{L}}}_A^{(t+1)}=\fold(\reshape(\bs{l}_A^{(t+1)},1),1)$;\\
$\bs{w}^{(t)}=\vectorize(\bs{\mc{C}}_{[1]}+\frac{\bs{\mc{X}}^{(t)}_{[1]}}{\mu^{(t)}})-\bs{l}_A^{(t+1)}$;\\
${\bs{l}}_I^{(t+1)}=\mc{F}^{-1}\bigg(\frac{\mc{F}\big(\bs{w}^{(t)}+\bs{F}^T(\bs{u}^{(t)}+\frac{\bs{y}_1^{(t)}}{\mu^{(t)}})\big)}{\mc{F}\big(\bs{F}^T\bs{F}+\bs{I}\big)} \bigg)$;\\
${\bs{\mc{L}}}_I^{(t+1)}=\fold(\reshape(\bs{l}_I^{(t+1)},1),1)$;\\
$\bs{u}^{(t+1)}=\mc{S}_{\frac{\alpha\bs{1}+\beta|\bs{v}^{(t)}|}{\mu^{(t)}}}\bigg[\frac{2\gamma(\bs{g}-\bs{v}^{(t)})+\mu^{(t)}\bs{Fl}_I^{(t+1)}-\bs{y}_1^{(t)}}{2\gamma+\mu^{(t)}}\bigg]$;\\
$\bs{v}^{(t+1)}=\mc{S}_{\frac{\beta|\bs{u}^{(t+1)}|}{\mu^{(t)}}}\bigg[\frac{2\gamma(\bs{g}-\bs{u}^{(t+1)})+\mu^{(t)}\bs{Fl}_A^{(t+1)}-\bs{y}_2^{(t)}}{2\gamma+\mu^{(t)}}\bigg]$;\\
$\bs{\mc{X}}^{(t+1)} = \bs{\mc{X}}^{(t)}+\mu^{(t)}(\bs{\mc{C}}-\bs{\mc{L}}_I^{(t+1)}-\bs{\mc{L}}_A^{(t+1)})$;\\
$\bs{y}_1^{(t+1)} = \bs{y}_1^{(t)} +\mu^{(t)}(\bs{u}^{(t+1)}-\bs{Fl}_I^{(t+1)})$;\\
$\bs{y}_2^{(t+1)} = \bs{y}_2^{(t)} +\mu^{(t)}(\bs{v}^{(t+1)}-\bs{Fl}_A^{(t+1)})$; \\
$\mu^{(t+1)}=\rho\mu^{(t)}$;$t=t+1$;\\
}
\KwOut{$\bs{\mc{L}}_I^*=\bs{\mc{L}}_I^{(t-1)},\bs{\mc{L}}_A^*=\bs{\mc{L}}_A^{(t-1)}$}
\label{alg:DeArtifactor}
\end{algorithm}

\noindent\textbf{$\bs{u}$-subproblem:} Let us now focus on updating $\bs{u}^{(t+1)}$, which corresponds to the following optimization problem:
\begin{equation}
\bs{u}^{(t+1)}=\argmin_{\bs{u}}\left\{\begin{aligned}
&\alpha\|\bs{u}\|_1+\gamma\|\bs{g}-\bs{u}-\bs{v}^{(t)}\|_F^2+\\&\beta\|\bs{u}\odot\bs{v}^{(t)}\|_1+\Phi(\bs{y}_1^{(t)},\bs{u}-\bs{Fl}_I^{(t+1)}). \end{aligned}\right.
\label{eq:u}
\end{equation}
The closed form solution is obtained by:
\begin{equation}
\bs{u}^{(t+1)}=\mc{S}_{\frac{\alpha\bs{1}+\beta|\bs{v}^{(t)}|}{\mu^{(t)}}}\bigg[\frac{2\gamma(\bs{g}-\bs{v}^{(t)})+\mu^{(t)}\bs{Fl}_I^{(t+1)}-\bs{y}_1^{(t)}}{2\gamma+\mu^{(t)}}\bigg].
\label{eq:uS}
\end{equation}

\noindent\textbf{$\bs{v}$-subproblem:} The updating of $\bs{v}^{(t+1)}$ is analogue to that of $\bs{u}^{(t+1)}$.  The associated optimization problem is:
\begin{equation}
\argmin_{\bs{v}}\left\{\begin{aligned}
&\gamma\|\bs{g}-\bs{u}^{(t+1)}-\bs{v}\|_F^2+\beta\|\bs{u}^{(t+1)}\odot\bs{v}\|_1\\&+\Phi(\bs{y}_2^{(t)},\bs{v}-\bs{Fl}_A^{(t+1)}). \end{aligned}\right.
\label{eq:v}
\end{equation}
Similarly, the closed form solution of \eqref{eq:v} looks like:
\begin{equation}
\bs{v}^{(t+1)}=\mc{S}_{\frac{\beta|\bs{u}^{(t+1)}|}{\mu^{(t)}}}\bigg[\frac{2\gamma(\bs{g}-\bs{u}^{(t+1)})+\mu^{(t)}\bs{Fl}_A^{(t+1)}-\bs{y}_2^{(t)}}{2\gamma+\mu^{(t)}}\bigg].
\label{eq:uS}
\end{equation}

\noindent\textbf{Multipliers and $\mu$:} Besides, there are the multipliers and $\mu$ need to be updated, which can be simply accomplished by:
\begin{equation}
\begin{aligned}
\bs{\mc{X}}^{(t+1)} =& \bs{\mc{X}}^{(t)}+\mu^{(t)}(\bs{\mc{C}}-\bs{\mc{L}}_I^{(t+1)}-\bs{\mc{L}}_A^{(t+1)});\\
\bs{y}_1^{(t+1)} =& \bs{y}_1^{(t)} +\mu^{(t)}(\bs{u}^{(t+1)}-\bs{Fl}_I^{(t+1)});\\
\bs{y}_2^{(t+1)} =& \bs{y}_2^{(t)} +\mu^{(t)}(\bs{v}^{(t+1)}-\bs{Fl}_A^{(t+1)});\\
\mu^{(t+1)}=&\rho\mu^{(t)}, \rho>1.
\end{aligned}
\label{eq:mlmu}
\end{equation}

For clarity, the procedure of solving the problem \eqref{eq:cvx} is summarized in Algorithm \ref{alg:DeArtifactor}. The algorithm terminates when $\|\bs{\mc{C}}-\bs{\mc{L}}_I^{(t+1)}-\bs{\mc{L}}_A^{(t+1)}\|_F\leq\delta\|\bs{\mc{C}}\|_F$ with $\delta=10^{-7}$ or the maximal number of iterations is reached.
\section{Experiments}
\begin{figure}[t]
\begin{center}
\includegraphics[width=0.48\linewidth]{./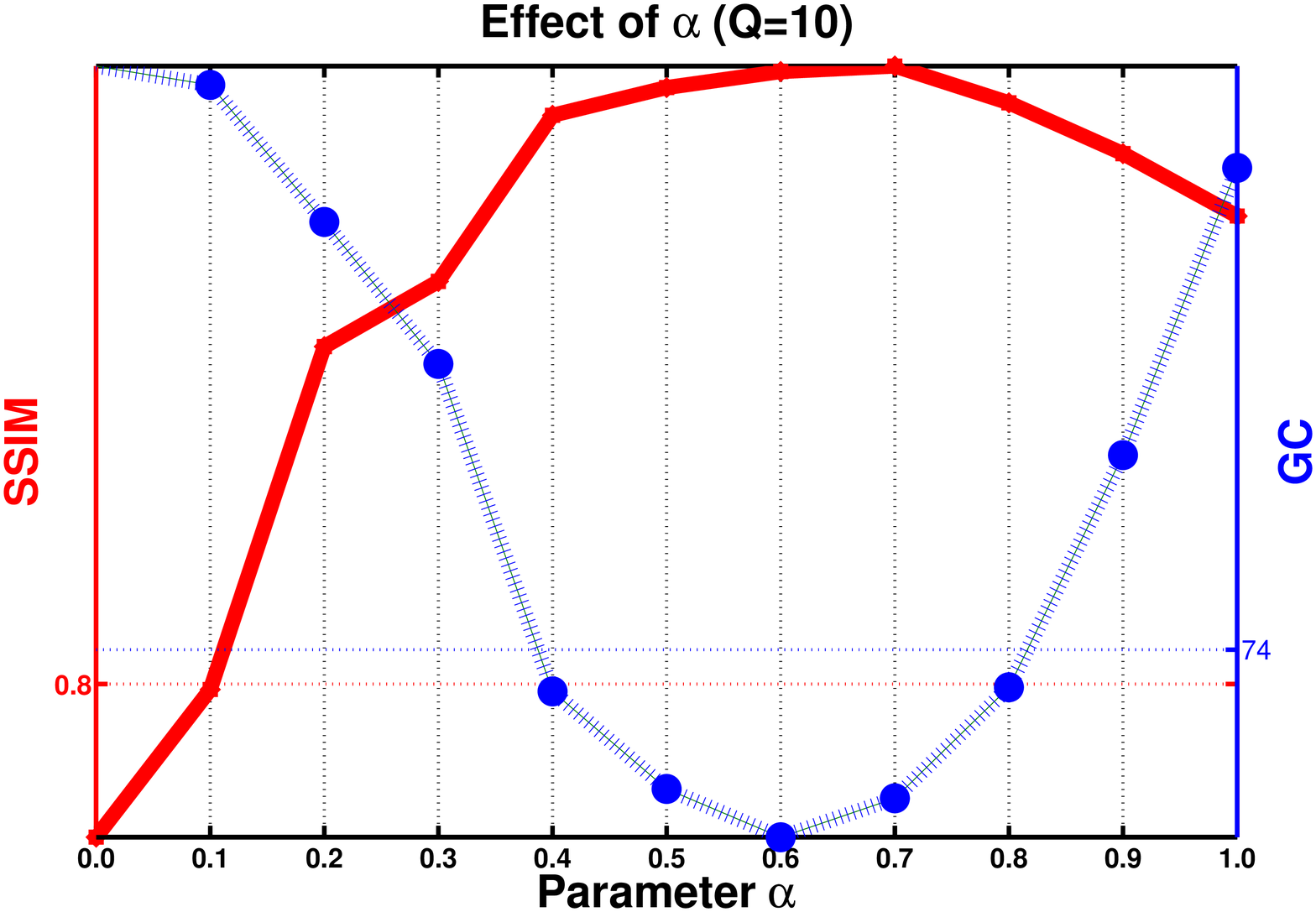}
\includegraphics[width=0.48\linewidth]{./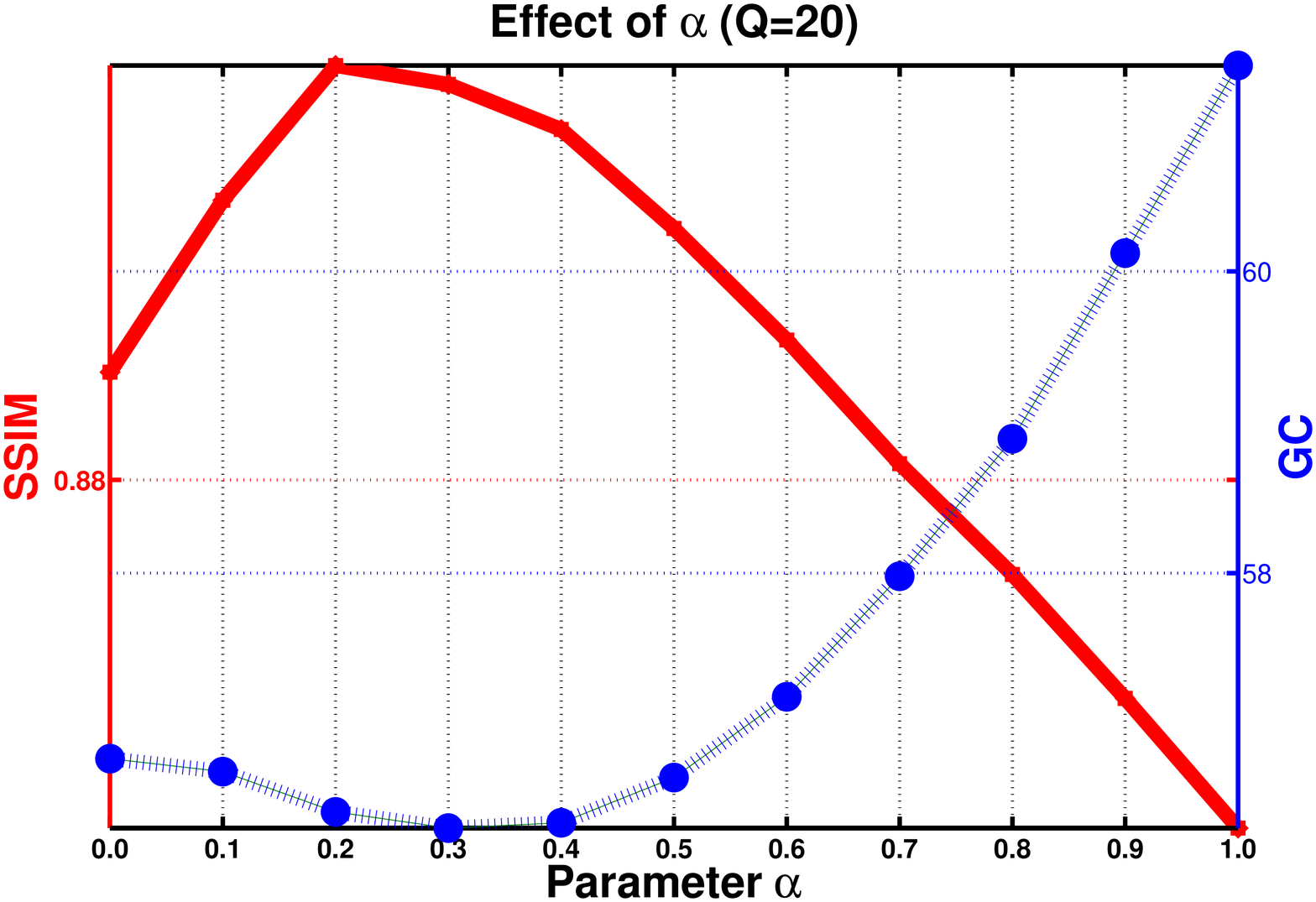}\\
\includegraphics[width=0.48\linewidth]{./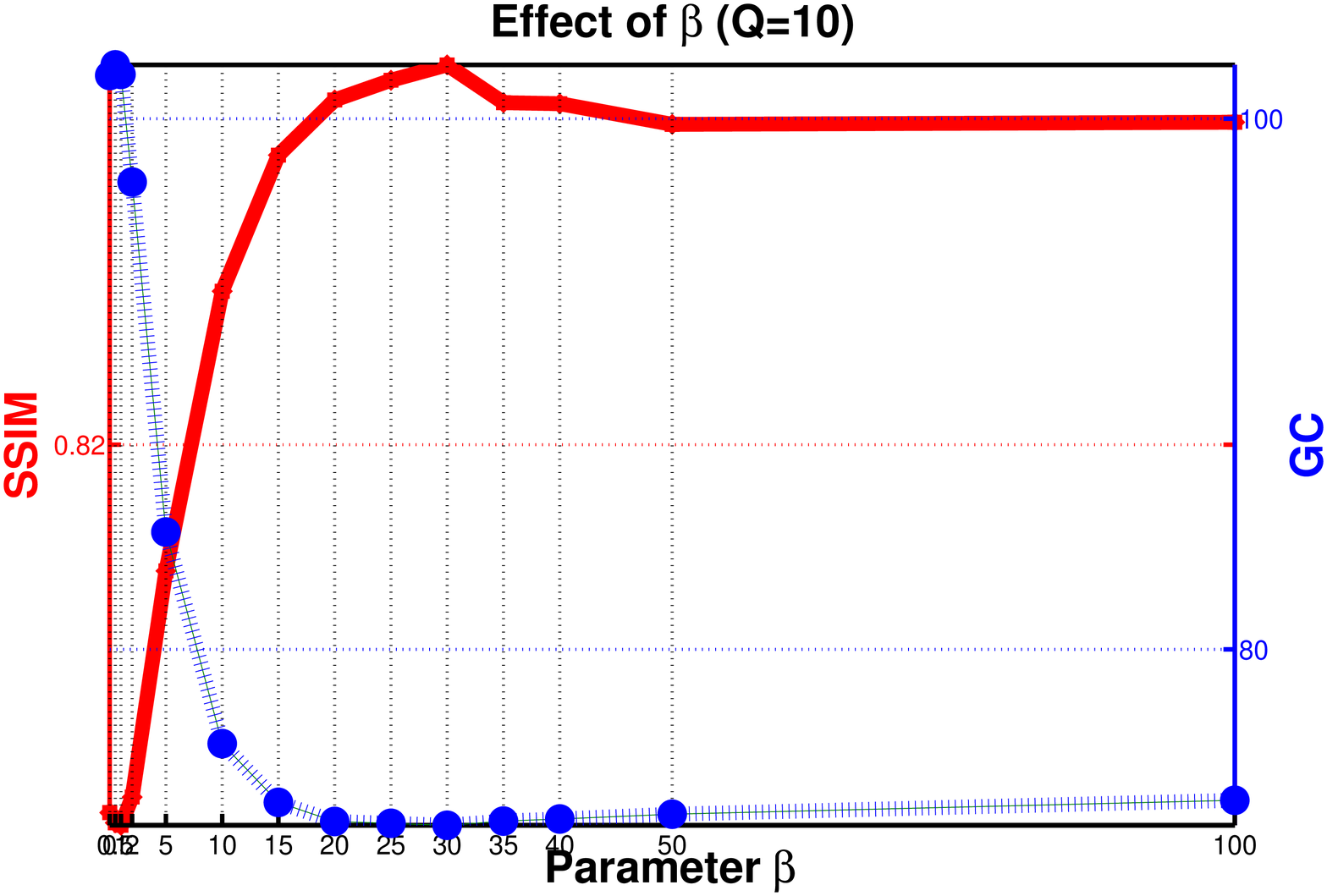}
\includegraphics[width=0.48\linewidth]{./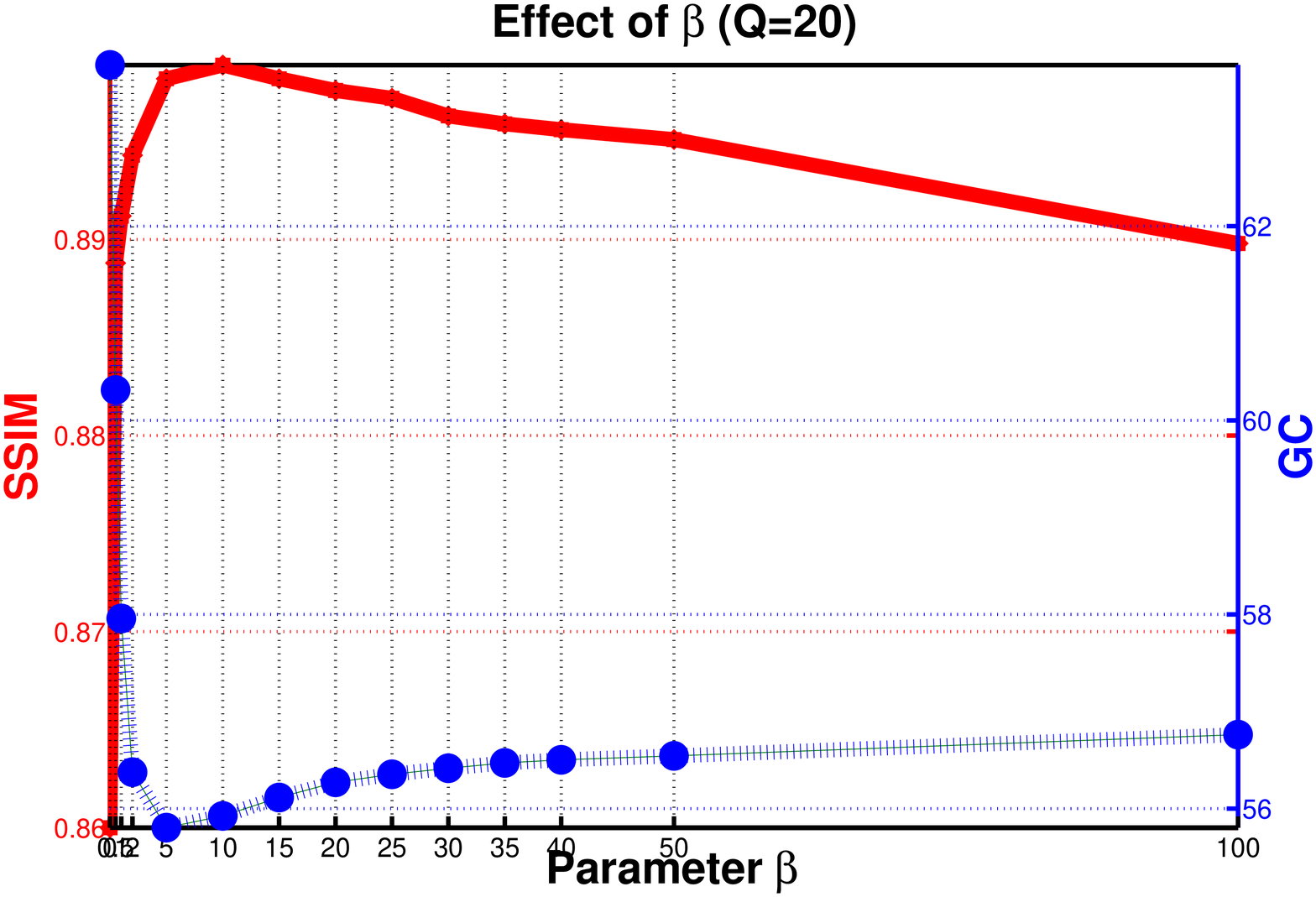}\\
\includegraphics[width=0.48\linewidth]{./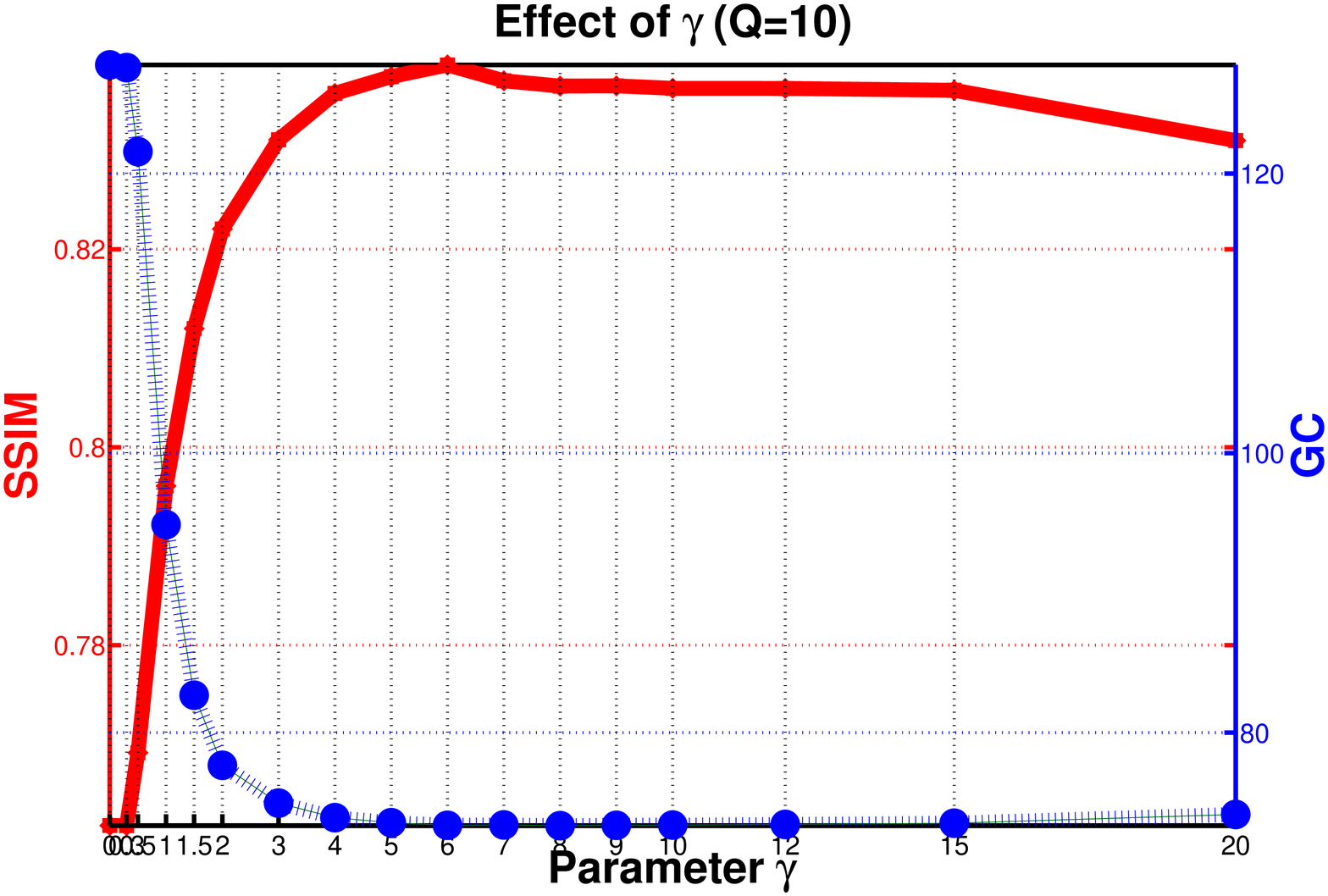}
\includegraphics[width=0.48\linewidth]{./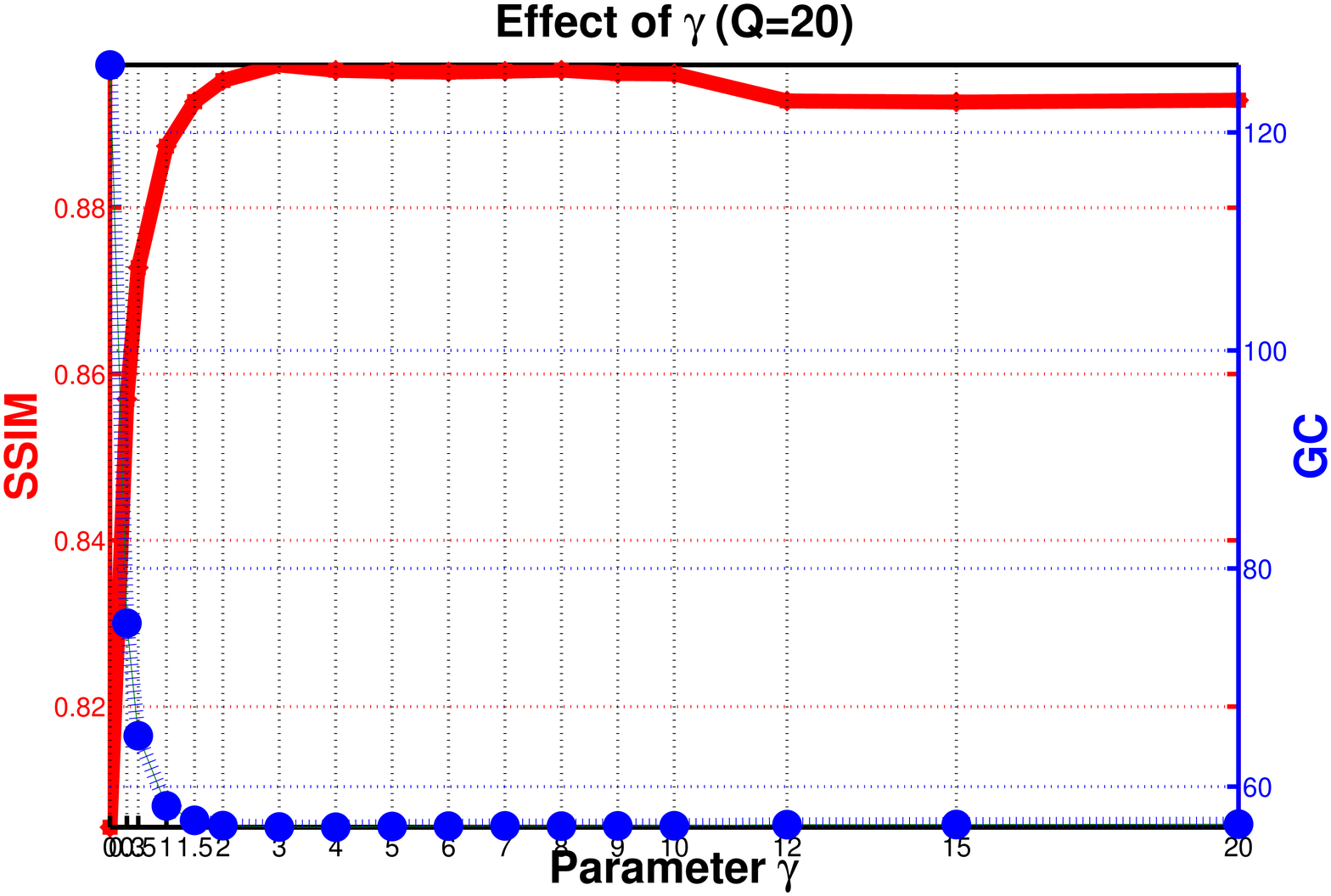}
\end{center}
\vspace{-10pt}
\caption{\textbf{Top:} the effect of $\alpha$ with $\beta$ and $\gamma$ fixed. \textbf{Middle:} the effect of $\beta$ with $\alpha$ and $\gamma$ fixed. \textbf{Bottom:} the effect of $\gamma$ with $\alpha$ and $\beta$ fixed. \textbf{Left:} the case with JPEG quality 10. \textbf{Right:} the case with JPEG quality 20.}
\vspace{-10pt}
\label{fig:para}
\end{figure}

\textbf{Parameter Effect.}
Our model involves three free parameters including $\alpha$, $\beta$ and $\gamma$. We here test the effect of each parameter. Although the quality assessment for the task of deblocking is questionable \cite{Metric}, we still employ some to reflect the trend of varying parameters. 
The most widely used full reference quality assessment might be the peak signal-to-noise ratio (PSNR), which is mathematically simple, but does not correlate well with perceived visual quality. So we do not employ PSNR to quantitatively measure the performance in this paper. Alternatively, the structural similarity (SSIM) metric tries to measure how similar a pair of images are (the deblocked result and its original), which considers three aspects of similarity including luminance, contrast and structure, and thus is more appropriate than PSNR. In addition, we introduce a novel metric called gradient consistency (GC) to corporate with SSIM, which is defined as follows:
\begin{equation}
GC(\bs{\mc{A}}, \bs{\mc{B}}) = \frac{\|\nabla\bs{\mc{A}}-\nabla\bs{\mc{B}}\|_F^2}{\prod_{i=1}^nD_i},
\end{equation} 
where $\bs{\mc{A}}$ is the reference and $\bs{\mc{B}}$ the recovered. GC is to see the consistency of gradients of two individuals. Please notice that the higher SSIM the better, while the lower GC the better. Because the dependence of the three parameters is complex, we test them separately. For $\alpha$, we fix $\beta$ and $\gamma$ to $30$ and $6$, respectively. As can be viewed in Fig. \ref{fig:para}, the best $\alpha$ values change from $0.6\sim0.7$ for the case with JPEG quality $10$ to $0.2\sim0.3$ for the case with JPEG quality $20$ in terms of both SSIM and GC. This result is consistent with the fact that more artifacts require more powerful smoother to eliminate. As for $\beta$, we can observe from the second row of Fig. \ref{fig:para} that it performs stably in the range $[15,100]$ for JPEG quality $10$ and $[5,100]$ for JPEG quality $20$, respectively. Similarly, the parameter $\gamma$ can achieve high performance when it is set to a relatively large value for both the two cases shown in Fig. \ref{fig:para}. For the rest experiments, we will fix $\beta$ and $\gamma$ to $30$ and $6$, respectively.

\begin{figure}[t]
\begin{center}
\includegraphics[width=0.8\linewidth]{./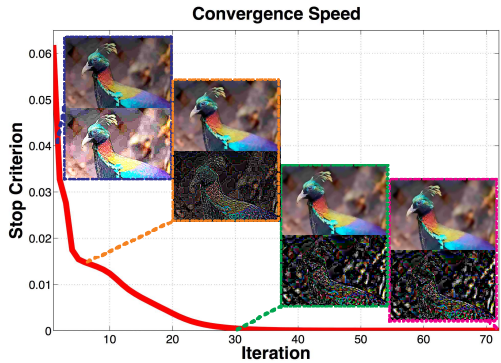}
\end{center}
\vspace{-10pt}
\caption{The convergence speed of Algorithm \ref{alg:DeArtifactor}.}
\vspace{-15pt}
\label{fig:conv}
\end{figure}
\begin{figure*}[t]
\begin{center}
\includegraphics[width=\linewidth]{./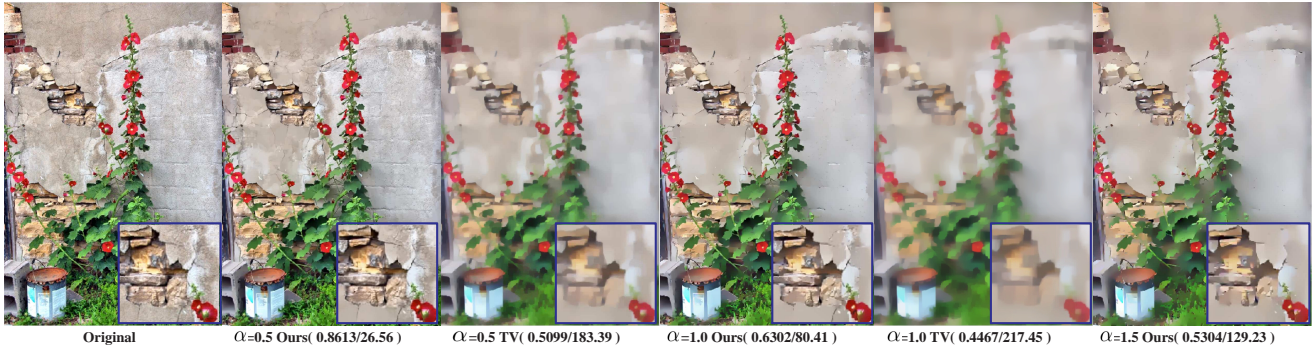}
\end{center}
\vspace{-10pt}
\caption{An illustrative example to reveal the difference between TV model and our method. }
\vspace{-10pt}
\label{fig:tvvso}
\end{figure*}

\textbf{Convergence Speed.} Figure \ref{fig:conv} displays the convergence speed of the proposed Algorithm \ref{alg:DeArtifactor}, without loss of generality, on the image shown in Fig. \ref{fig:open}, in which the stop criterion sharply drops to the level of $10^{-5}$ with about $30$ iterations and to $10^{-7}$ with $70$ iterations. We also show four pairs of the separated layers at $3$, $5$, $30$ and $70$ iterations. We see that the results at $30$ iterations is very close to those at $70$. 

\textbf{Relationship to TV model.}
From the objective function \eqref{eq:cvx}, we can observe that our model can reduce to the anisotropic Total Variation (TV) model by disabling the third and fourth terms, say the gradient independence prior. To demonstrate the benefit of the gradient independence prior, we conduct a comparison between TV and our method. To better view the difference, we do not introduce artifacts into the testing. As shown in Fig. \ref{fig:tvvso}, bigger $\alpha$ leads to more details smoothed for both TV and DSLP. The difference is that, in terms of visual quality, TV smooths both the high-frequency and low-frequency information, while our DSLP eliminates weak textures but keeps dominant edges. Quantitatively, when setting $\alpha$ to 1.0, DSLP achieves $0.6302$ SSIM and $80.41$ GC, which are much better than those of TV, \textit{i.e.} $0.4467$ SSIM and $217.45$ GC. The results of $\alpha=0.5$ are analogue. Please note that even increasing $\alpha$ to $1.5$, DSLP still can provide very promising result. From the viewpoint of artifact, we further give an example shown in Fig. \ref{fig:tv} to see the power of the independence prior. For better view, we amplify the artifact to $10$ times of it. As can be seen, TV greatly filters textures with very high false positive ratio (the details of bird body), while DSLP mainly focuses on the block artifacts. The above experimental results reveal the relationship and the difference between TV and DSLP, and demonstrate the advance of DSLP. 

\begin{figure}[t]
\begin{center}
\includegraphics[width=\linewidth]{./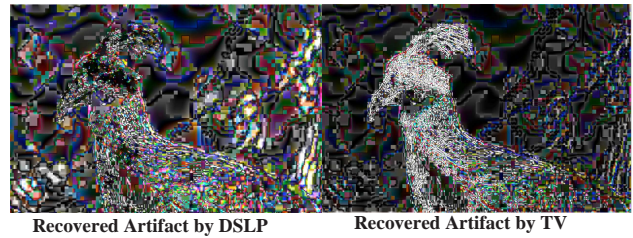}
\end{center}
\vspace{-10pt}
\caption{Visual comparison of recovered artifact between TV and our proposed method.}
\vspace{-10pt}
\label{fig:tv}
\end{figure}

\begin{figure}[t]
\begin{center}
\includegraphics[width=\linewidth]{./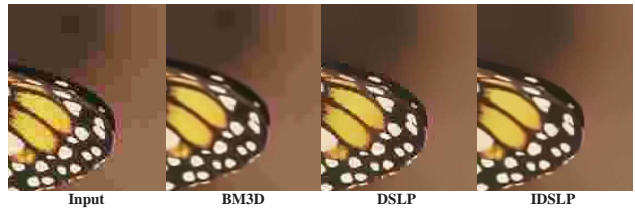}
\end{center}
\vspace{-10pt}
\caption{Illustration of JPEG compression complication. }
\vspace{-10pt}
\label{fig:IDSLP}
\end{figure}

\textbf{IDSLP: Improved DSLP.} Let us here revisit the complication of JPEG compression in terms of visual quality. As can be viewed in the first image of Fig. \ref{fig:IDSLP} (JPEG Quality $10$), there are actually two main issues, say the staircase effect around block boundaries as well as the serration along image edges. The denoising techniques like BM3D \cite{BM3D} can reduce the serration in the frame, but hardly deal with the staircase effect, as shown in the second picture of Fig. \ref{fig:IDSLP} (setting $\sigma=50$). As for DSLP, it is good at cleaning the staircase around block boundaries but leaves the serration (see the third picture in Fig. \ref{fig:IDSLP}, setting $\alpha=0.6$). Intuitively, we can further improve the visual quality by making use of their respective advantages. The most right result in Fig. \ref{fig:IDSLP} demonstrates the effectiveness of such a strategy, which is obtained by firstly executing the denoising technology (in this paper we adopt BM3D, $\sigma=25$) and then applying DSLP on the denoised version ($\alpha = 0.3$).

\begin{figure*}[t]
\begin{center}
\includegraphics[width=\linewidth]{./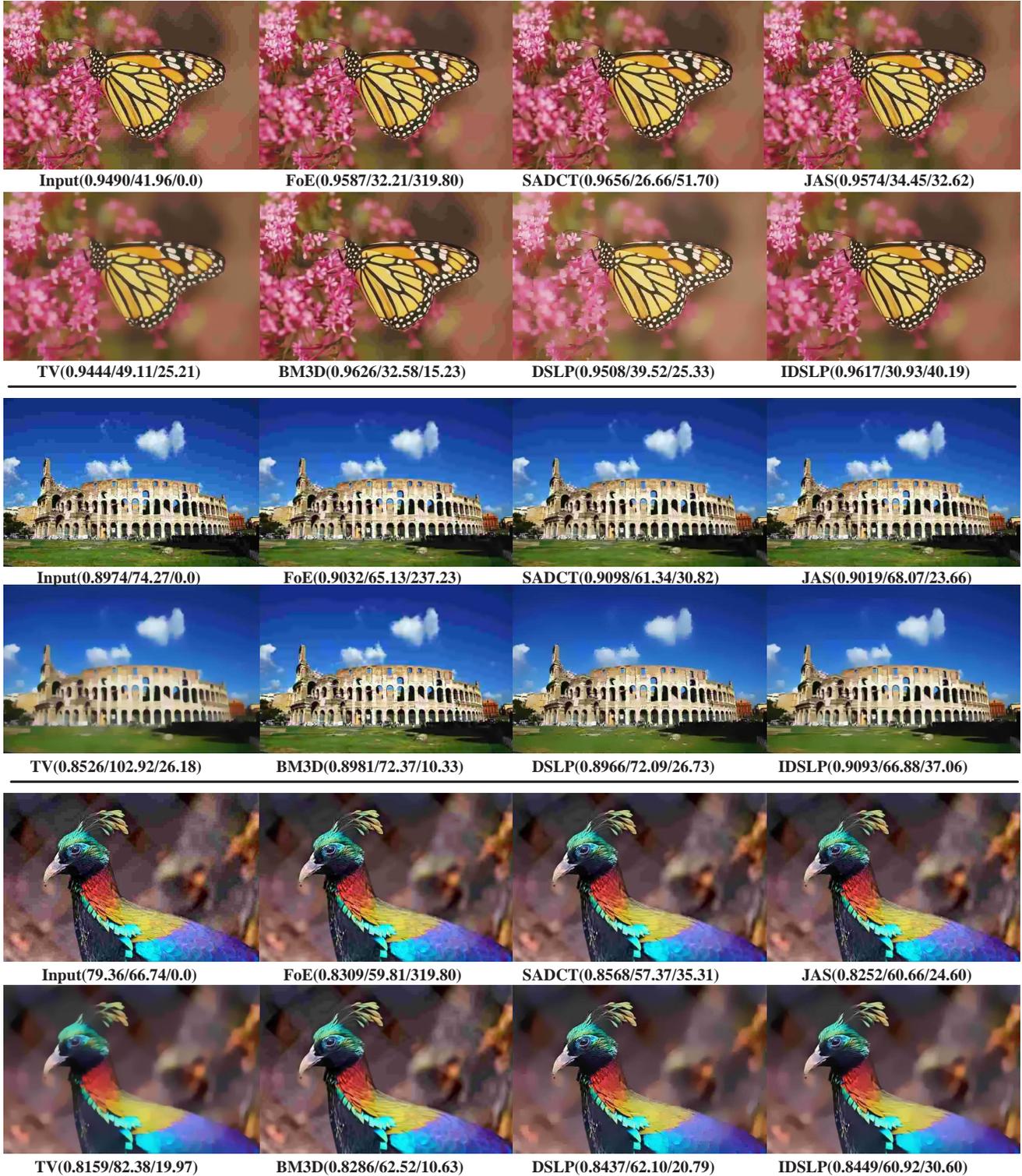}
\end{center}
\caption{Performance comparison among FoE \cite{ImageDe:FoE}, SADACT \cite{ImageDe:Foi}, JAS \cite{artifactor}, BM3D \cite{BM3D}, TV \cite{ADMTV}, DSLP and IDSLP on image deblocking. Besides the visual results, three quantitative metrics are reported, \textit{i.e.} SSIM/GC/Time(s).}
\label{fig:imgR}
\end{figure*}

\begin{figure*}[t]
\begin{center}
\includegraphics[width=\linewidth]{./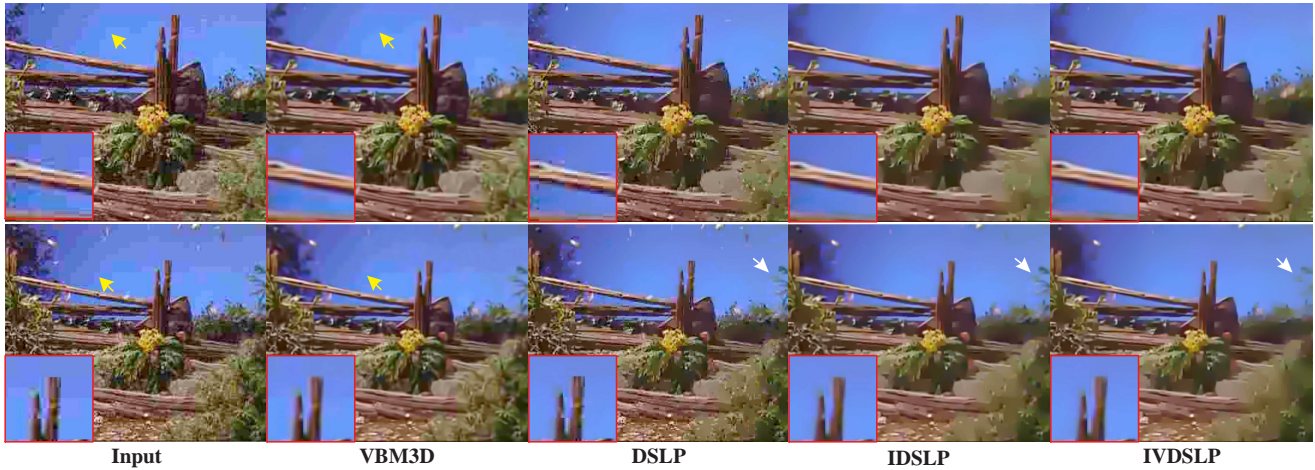}
\end{center}
\vspace{-10pt}
\caption{Visual comparison of video deblocking (16 frames). Two rows correspond to two sample frames.  }
\label{fig:vidR}
\end{figure*}

\textbf{Image Deblocking.} In this part, we evaluate the performance of our method on image deblocking, compared with the state-of-the art alternatives including a reconstruction based method using Field of Experts (FoE) \cite{ImageDe:FoE}, a local filtering based method via Shape Adaptive DCT (SADCT) \cite{ImageDe:Foi}, a layer decomposition based method for JPEG Artifact Suppression (JAS) \cite{artifactor}, a denoising based method BM3D \cite{BM3D}, a Total Variation regularized restoration method (TV) \cite{ADMTV}, and our proposed DSLP and IDSLP. %We note that the comparison with BM3D and TV is not fair since they are more general algorithms designed for removing any kinds of noise.
The codes for the competitors are either downloaded from the authors' websites or provided by the authors, their parameters are tuned or set as suggested by the authors for obtaining their best possible results. As for DSLP on image deblocking, only spatial gradients are taken into account, say $\nabla\defeq\{\nabla_1,\nabla_2\}$. In addition, all the codes are implemented in Matlab, which assures the fairness of time cost comparison. We provide the quantitative (SSIM, GC and Time) and qualitative results on several images in Fig. \ref{fig:imgR}, which are compressed by JPEG with quality $10$. As can be seen from Fig. \ref{fig:imgR}, FoE, SADCT, JAS and BM3D can only slightly suppress but not thoroughly eliminate the staircase effect under such a compression rate. DSLP is able to eliminate or largely reduce the staircase, while IDSLP can further mitigate the effect of edge serration. In terms of computational cost, DSLP is superior to SADCT and FoE, and competitive with JAS and TV, but inferior to BM3D. Moreover, IDSLP integrates the denoising and deblocking components, and thus its time cost sums up those of BM3D (for this paper) and DSLP.
Due to the limited space and the nature of the deblocking problem, so please see the supplementary material for larger and more results, which are best viewed in original sizes.

\textbf{Video Deblocking.}  For this task, we test both spatial only gradients $\nabla\defeq\{\nabla_1,\nabla_2\}$ and spatial-temporal gradients $\nabla\defeq\{\nabla_1,\nabla_2, \nabla_3\}$ for (I)DSLP, which are denoted as (I)DSLP and (I)VDSLP, respectively. This comparison involves VBM3D that is a video extension of BM3D, DSLP, IDSLP and IVDSLP.\footnote{Another related video deblocking method is \cite{videoDe:NC}, but its code is not available when this paper is prepared. Therefore, we do not compare with it. Moreover, with regard to time cost, as the authors of \cite{videoDe:NC} stated, their C++ implementation takes about $3$ hours to process $32$ frame $640\times 480$ sequence, which significantly limits its applicability.} From Fig. \ref{fig:vidR}, we can see that the problem for BM3D on image deblocking still exists for VBM3D on video deblocking. In other words, the staircase remains (see yellow arrows). DSLP significantly reduces the staircase effect, while IDSLP and IVDSLP further take care of the serration. We note that, compared with IDSLP, IVDSLP slightly excludes some textures (\textit{e.g.} the leaves on the top-right corner, white arrows). This is because the temporal gradient is enforced to be sparse, which would be more helpful for videos with slow motions, but over-smooth the content of videos with sudden or fast motions. More video results can be found in the supplementary. 

\section{Conclusion}
Artifact separation from images or video sequences is an important, yet severely ill-posed problem. To overcome its difficulty, this paper has shown how to harness two prior structures of the intrinsic and artifact layers, including the gradient sparsity of the intrinsic layer and the gradient independence between the two components, to make the problem well-defined and feasible to solve. We have formulated the problem in a unified optimization framework and proposed an efficient algorithm to find the optimal solution. The experimental results, compared to the state of the arts, have demonstrated the clear advantages of the proposed method in terms of visual quality and simplicity, which can be used for many advanced image/video processing tasks.

{\small
\bibliographystyle{ieee}
\bibliography{egbib}
}

\end{document}